\documentclass[letterpaper, 10 pt, conference]{ieeeconf}

\IEEEoverridecommandlockouts

\usepackage{amsmath}
\usepackage{amssymb}
\usepackage{xfrac}
\usepackage{dsfont}
\usepackage{graphicx}
\usepackage[dvipsnames]{xcolor}
\usepackage{booktabs}
\usepackage[font=small,labelfont=bf]{caption}
\usepackage{macros}
\usepackage{cite}
\usepackage{url}
\usepackage{tikz}
\usetikzlibrary{arrows,arrows.meta,backgrounds,decorations,decorations.pathmorphing,positioning,fit,automata,shapes,patterns,plotmarks,calc}
\usepackage[ruled,vlined,linesnumbered]{algorithm2e}

\usepackage{newtxtext}
\usepackage{newtxmath}
\usepackage{bm}
\usepackage[small]{subfigure}

\makeatletter
\let\NAT@parse\undefined
\makeatother
\usepackage[colorlinks=false, hidelinks, backref=false]{hyperref}

\graphicspath{{figs/}}

\title{\LARGE \bf
Ternary Logic Encodings of Temporal Behavior Trees\\with Application to Control Synthesis
}

\author{Ryan Matheu, John S. Baras and Calin Belta%
\thanks{The authors are with the University of Maryland, College Park, USA. Emails: \{\texttt{rmatheu}, \texttt{baras}, \texttt{calin}\}\texttt{@umd.edu}.}%
}

\definecolor{goldencheese}{HTML}{F8D3A0}
\definecolor{exblue}{HTML}{2E5F7F}
\definecolor{exred}{HTML}{DA364B}
\definecolor{exgreen}{HTML}{248C2F}

\begin{document}

\maketitle
\thispagestyle{empty}
\pagestyle{empty}

\begin{abstract}

Behavior Trees (BTs) provide designers an intuitive graphical interface to construct long-horizon plans for autonomous systems. To ensure their correctness and safety, rigorous formal models and verification techniques are essential. Temporal BTs (TBTs) offer a promising approach by leveraging existing temporal logic formalisms to specify and verify the executions of BTs. However, this analysis is currently limited to offline post hoc analysis and trace repair. In this paper, we reformulate TBTs using a ternary-valued Signal Temporal Logic (STL) amenable for control synthesis. Ternary logic introduces a third truth value \textit{Unknown}, formally capturing cases where a trajectory has neither fully satisfied or dissatisfied a specification. We propose mixed-integer linear encodings for partial trajectory STL and TBTs over ternary logic allowing for correct-by-construction control strategies for linear dynamical systems via mixed-integer optimization. We demonstrate the utility of our framework by solving optimal control problems.
\end{abstract}

\section{Introduction\label{sec:introduction}}
    Task modeling for autonomous systems is a challenging art with a fine line between expressivity and interpretability. Temporal logics like Signal Temporal Logic (STL) have become prominent modeling formalisms for autonomous systems thanks to their rich expressivity, ability to specify time-dependent behaviors, and real-valued robustness semantics~\cite{maler2004monitoring,fainekos2009robustness,donze2010robust}. However, STL-based task planning can quickly become intractable when modeling long-horizon temporal behaviors and fallback mechanisms, often leading designers to implement a hierarchical approach~\cite{lin2025optimization,liu2025value}.

    Behavior Trees (BTs)~\cite{colledanchise2018behavior} are an emerging control paradigm that provide designers with an intuitive graphical interface for designing high-level control strategies for autonomous systems. BTs allow for sequential execution of tasks, fallback mechanisms for dealing with failures and disturbances, and parallel processing of multiple sub-strategies (see Fig. \ref{fig:TBT} for an example). An elegant feature of BTs is their modularity, enabling designers to reuse subtrees and make changes to strategies without universal changes to the system behavior. Along with their increased usage in control and planning, interest in the formalization of BTs for the purposes of verification, correct-by-construction synthesis, and temporal logic monitoring~\cite{schirmer2024temporal} has steadily increased.

    This work builds on the notion of using BTs as temporal logic task specifications and presents mixed-integer encodings for the BT temporal operators over a ternary-valued logic. Mixed-integer encodings allow for solving discrete-time optimal control problems with mathematical programming~\cite{raman2014model}, or feasibility checking for problems that are otherwise undecidable~\cite{ghosh2016diagnosis}.

    Our primary contributions in this work are:
    \begin{enumerate}
        \item
            We concretize the formalization of BTs with the ternary logic $K_3$.
        \item
            We introduce mixed-integer encodings for STL and formalized BTs over the ternary logic $K_3$.
        \item
            We demonstrate the utility of the formalized BT encodings for control by solving linear discrete-time optimal control problems.
    \end{enumerate}

    This paper is organized as follows: Section~\ref{sec:related-work} describes work related to the formalization of BTs including our previous work on the subject. In Section~\ref{sec:preliminaries}, we provide the necessary preliminaries on ternary logic, temporal logic, and system modeling. We formulate the main problem in Section~\ref{sec:problem-formulation} and presents our technical solution in Section~\ref{sec:encodings}. Section~\ref{sec:experiments} demonstrates our framework with two optimal control tasks.

    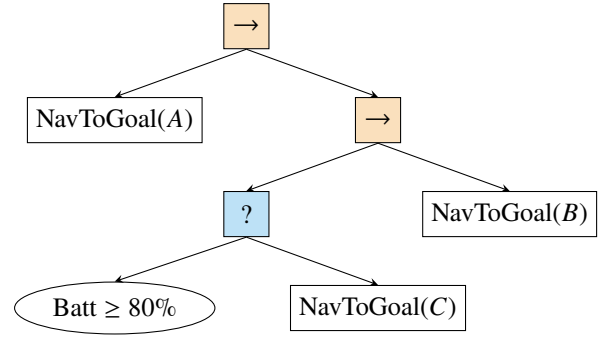
\begin{figure}[tp]
        \centering
        \begin{tikzpicture}
    \path   (0, 0)         node[draw, fill=goldencheese!70, minimum width=.6cm, minimum height=.6cm]           (root)   {$\rightarrow$}
           +(-1.75, -1.25) node[draw, minimum width=.6cm, minimum height=.6cm]           (EA)     {NavToGoal($A$)}
          ++(1.75, -1.25)  node[draw, fill=goldencheese!70, minimum width=.6cm, minimum height=.6cm]           (seq)    {$\rightarrow$}
           +(1.75, -1.25)  node[draw, minimum width=.6cm, minimum height=.6cm]           (EB)     {NavToGoal($B$)}
          ++(-1.75, -1.25) node[draw, fill=Cerulean!25, minimum width=.6cm, minimum height=.6cm]           (sel)    {?}
           +(-1.75, -1.25) node[draw, ellipse, minimum width=.75cm, minimum height=.6cm] (batt)   {$\text{Batt}\geq80\%$}
           +(1.75, -1.25)  node[draw, minimum width=.6cm, minimum height=.6cm]           (charge) {NavToGoal($C$)};

    \draw[-stealth] (root.south)--(EA.north);
    \draw[-stealth] (root.south)--(seq.north);
    \draw[-stealth] (seq.south)--(sel.north);
    \draw[-stealth] (sel.south)--(batt.north);
    \draw[-stealth] (sel.south)--(charge.north);
    \draw[-stealth] (seq.south)--(EB.north);
\end{tikzpicture}
        \caption{
            Example BT for a mobile robot task. BTs are read from left to right with the colored nodes representing the BT control operators and the leaf nodes representing execution nodes. This BT specifies that the mobile robot must first navigate to goal $A$. Then if the battery is greater than or equal to $80\%$, proceed to goal $B$. Otherwise, if the battery is less than $80\%$, navigate to goal $C$ then proceed to goal $B$.}
        \label{fig:TBT}
    \end{figure}
\section{Related Work\label{sec:related-work}}
    Task and motion planning (TAMP) using temporal logic has a long history covering many varieties of temporal logic and system models. TAMP from Linear Temporal Logic (LTL)~\cite{pnueli1977temporal} considers automata-theoretic techniques, where a continuous system is abstracted to a transition system, and a satisfying plan is constructed for it. Each state in the transition system then maps to a real-valued feedback control strategy that, when implemented, satisfies the top-level LTL formula~\cite{kloetzer2006fully,fainekos2009temporal}. TAMP with Metric Interval Temporal Logic (MITL)~\cite{alur1996benefits} consider both automata-theoretic, optimization-based, and model-free (reinforcement learning) control techniques covering a wide variety of autonomous systems including robotic manipulators and UAVs~\cite{baras2015MTL,baras2016TAMTL,baras2019monMTL,baras2020RLMITL}. Real-valued logics like STL have been widely investigated for control synthesis. Techniques include mixed-integer programming, gradient-based methods using the real-valued robustness semantics, and model-free control with reinforcement learning. These techniques cover a large variety of autonomous systems including linear, nonlinear, and stochastic models~\cite{belta2019formal,kurtz2022mixed,raman2014model,wolff2014optimization,baras2020plmonSTL,haghighi2019control,kordabad2024control}.
    
    The authors in~\cite{schirmer2024temporal} introduce the notion of utilizing BTs as task specifications through Temporal BTs (TBTs), which is an offline temporal logic monitor. This extends STL to include BT-inspired operators. A TBT formula is equipped with qualitative and quantitative semantics that indicate the degree of satisfaction (robustness) of a trajectory with respect to the TBT formula, and is computed using dynamic programming. In~\cite{schirmer2025trace}, the authors introduce mixed-integer encodings for TBTs using the Boolean qualitative semantics for trace repair -- computing a satisfying trace by minimal perturbing a violating trace. As we concretize later in this paper, BTs operate in a ternary logic domain, and therefore require a ternary logic system to accurately formalize them. 
    
    In prior works, we have investigated the formalization of BTs over ternary logic for control synthesis. In~\cite{matheu2025bt2automata}, we show an isomorphism between formalized BTs and finite state automata, allowing synthesis for discrete transition systems. In~\cite{matheu2025omtbt}, we develop an online TBT robustness monitor and synthesize control policies for unmodeled systems via reinforcement learning. To the best of our knowledge, these are the only works that use a ternary logic system and perform control synthesis with respect to TBT formulas.
\section{Preliminaries\label{sec:preliminaries}}

    \subsection{Behavior Trees}

        BTs for controlling autonomous systems~\cite{colledanchise2018behavior} are directed acyclic graphs consisting of execution and control nodes (see Fig. \ref{fig:TBT} for an example). Leaf nodes are \textit{execution} nodes, responsible for observing or controlling the state of the system. Connecting the execution nodes, and responsible for the overall structure of the tree, are \textit{control} nodes. The control nodes determine the execution flow of the tree and return conditions based on the status of their children. Each node in a BT has three possible conditions: \textit{Success}, \textit{Failure}, and \textit{Running}. The control node types are:
    
        \noindent\textbf{Sequence Node (${\seq}/{\rightarrow}$):} The \textit{Sequence} node can have $k\geq1$ children. The Sequence node executes children from left-to-right contingent on each child returning \textit{Success}. The Sequence node returns \textit{Success} if all children succeed, \textit{Failure} if a single child fails, and \textit{Running} otherwise.
    
        \noindent\textbf{Selector Node (${\sel}/{?}$):} The \textit{Selector} node can have $k\geq1$ children. The Selector is the complement of the Sequence node. The Selector node executes children from left-to-right contingent on each child returning \textit{Failure}. The Selector node returns \textit{Failure} if all children fail, \textit{Success} if a single child succeeds, and \textit{Running} otherwise.
        
        Leaf nodes are classified into two types: \textit{action} and \textit{condition} nodes. Action nodes control the state of the system, returning a logical status of \textit{Running} while executing. Upon termination, they must return a status of \textit{Success} or \textit{Failure}. Actions nodes are represented graphically with a rectangle. Condition nodes query the state of the system and instantly return a logical status, either \textit{Success} or \textit{Failure}. Condition nodes are represented graphically with an ellipse.
    
    \subsection{Signal Temporal Logic}
        We consider the discrete-time domain as the set of all nonnegative integers: $\mathbb{T}:=\{t\in\mathbb{Z}: t\geq0\}$. The state of a dynamical system is represented by a real vector $x_t\in\mathbb{R}^n$ for time step $t\in\mathbb{T}$. A system trajectory is an infinite tuple of states $x=(x_0,\,x_1,\,x_2,\dots)$. Because we are working with discrete time indices, a time interval is formally defined as the set $[a,\,b]:=\{t\in\mathbb{T}: a\leq t\leq b,\,a,b\in\mathbb{T}\}$. Note that this definition includes time intervals that contain a single time step, as is the case when $a=b$.
        
        STL~\cite{maler2004monitoring} provides a formalism for specifying spatiotemporal behaviors of real-valued systems. The semantics of STL are defined over infinite trajectories, where the Boolean satisfaction of an STL formula is interpreted as either \textit{True} or \textit{False}. The basic primitive in STL is a \textit{signal predicate} of the form $\mu:=f(x_t)\geq0$ where $f:\mathbb{R}^n\to\mathbb{R}$ is a Lipschitz continuous function. An STL formula is defined using the following grammar:
        \[
            \varphi:=\mu\,\vert\,\neg\varphi\,\vert\,\varphi_1\wedge\varphi_1\,\vert\,\square_{[a,\,b]}\varphi\,\vert\,\lozenge_{[a,\,b]}\varphi,
        \]
        where $[a,\,b]$ is a time interval. Formulas are recursively defined starting with the signal predicates and composed together using logical connectives (${\neg},\,{\wedge}$) and temporal operators. Other familiar logical connectives such as Boolean \textit{or} (${\vee}$), \textit{implication} (${\rightarrow}$), and \textit{equivalence} (${\leftrightarrow}$) are defined using De Morgan's laws. $\square_{[a,\,b]}$ is the unary \textit{always} operator and $\lozenge_{[a,\,b]}$ is the unary \textit{eventually} operator. As an example, a reach-and-hold specification can be formulated as $\lozenge_{[0,\,10]}\square_{[0,\,2]}(y\geq 5)$, meaning $y$ must become greater than or equal to $5$ within 11 time steps and remain greater than or equal to $5$ for 3 time steps (here $t=0$ counts as the first time step). The \textit{horizon} of an STL formula is the number of time steps for which future values are needed to compute the satisfaction of the formula at the current time step. A recursive definition for the horizon of an STL formula is provided in~\cite{dokhanchi2014line}. For instance, the previous reach-and-hold specification requires $11+3=14$ time steps \textit{at most} to compute satisfaction.

        STL admits two semantics: the qualitative (Boolean) semantics, and the quantitative (robustness) semantics~\cite{donze2010robust}. The robustness semantics provide a real-valued metric indicating how much a trajectory satisfies or violates a formula. However, for the purposes of mixed-integer programming, we will only consider the qualitative semantics.

        \begin{definition}[STL Qualitative Semantics \cite{maler2004monitoring}]
            \label{def:stl-qual}
            The qualitative semantics for STL map real-valued trajectories to an interpretation of a formula, and are defined inductively as follows:
            \[
                \begin{aligned}
                    x,\,t\models\mu\quad&\leftrightarrow\quad u(x_t), \\
                    x,\,t\models\neg\varphi\quad&\leftrightarrow\quad\neg(x,\,t\models\varphi), \\
                    x,\,t\models\varphi_1\wedge\varphi_2\quad&\leftrightarrow\quad(x,\,t\models\varphi_1)\wedge(x,\,t\models\varphi_2), \\
                    x,\,t\models\square_{[a,\,b]}\varphi\quad&\leftrightarrow\quad\forall\tau\in[a+t,\,b+t]:x,\,\tau\models\varphi, \\
                    x,\,t\models\lozenge_{[a,\,b]}\varphi\quad&\leftrightarrow\quad\exists\tau\in[a+t,\,b+t]:x,\,\tau\models\varphi,
                \end{aligned}
            \]
            where the statement $x,\,t\models\varphi$ means trajectory $x$ satisfies (models) formula $\varphi$ at time step $t$. The notations $\neg(x,\,t\models\varphi)$ and $x,\,t\not\models\varphi$ are used interchangeably.
        \end{definition}

    \subsection{Temporal Behavior Trees\label{subsec:TBT}}
        TBTs~\cite{schirmer2024temporal} extend STL by introducing new $k$-ary temporal operators inspired by the BT control nodes. The syntax of TBTs is defined using the following grammar:
        \[
            \varphi:=\psi\,\vert\,\seq{(\varphi_1,\dots,\,\varphi_k)}\,\vert\,\sel{(\varphi_1,\dots,\,\varphi_k)},
        \]
        where $\psi$ is an STL formula. TBT formulas build on top of STL formulas meaning all STL formulas are valid TBTs. TBTs admit Boolean qualitative and quantitative semantics, the details of which are provided in~\cite{schirmer2024temporal}. Informally, the qualitative semantics of the $\seq$ operator state that each subformula $\varphi_1$ through $\varphi_k$ must be satisfied one after the other in sequence. The $\sel$ operator must satisfy one subformula, where each subformula $\varphi_1$ through $\varphi_k$ are evaluated sequentially. We revisit the formal definition of the qualitative semantics in Section~\ref{sec:encodings} where we define them over a ternary logic.

        TBTs not only provide the ability to specify complex sequences of behaviors, but also have an interpretable graph structure. For instance, the TBT formula $\seq{(\lozenge A,\,\seq{(\sel{(\text{Batt}\geq0.8,\,\lozenge C)},\,\lozenge B)})}$ has the equivalent graphical representation shown in Fig.~\ref{fig:TBT}. To be consistent with the BT notion of action and condition nodes, we restrict condition nodes to untimed formulas such that they evaluate in a single time step. Action nodes are more general and can consist of timed formulas.

    \subsection{Kleene's Strong Logic}
        Kleene's strong logic ($K_3$)~\cite{kleene1952introduction} proposes a third truth value \textit{Unknown} (sometimes referred to as \textit{Undetermined}) to represent indeterminacy in computation, concretizing the case when a proposition is neither \textit{True} nor \textit{False}.

        \begin{definition}[$K_3$ Syntax and Semantics]
            \label{def:k3}
            Let $P$ be a set of propositional variables that take values in the set $\{False,\,Unknown,\,True\}$ (abbr. $\{F,\,U,\,T\}$). A formula in $K_3$ is defined inductively as: 1) every $p\in P$ is a formula; 2) if $\varphi$ is a formula, then $\neg \varphi$ is a formula; 3) if $\varphi_1$ and $\varphi_2$ are formulas, then $\varphi_1\wedge \varphi_2$ and $\varphi_1\vee \varphi_2$ are formulas. The interpretations of the connectives $\neg$, $\wedge$, and $\vee$ are defined as:
            
            \begin{center}
                %\centering
                \begin{tabular}{c|c}
                    $\neg$ & \\
                    \hline
                    $F$ & $T$ \\
                    $U$ & $U$ \\
                    $T$ & $F$
                \end{tabular}
                \quad
                \begin{tabular}{c|ccc}
                    $\wedge$ & $F$ & $U$ & $T$ \\
                    \hline
                    $F$ & $F$ & $F$ & $F$ \\
                    $U$ & $F$ & $U$ & $U$ \\
                    $T$ & $F$ & $U$ & $T$
                \end{tabular}
                \quad
                \begin{tabular}{c|ccc}
                    $\vee$ & $F$ & $U$ & $T$ \\
                    \hline
                    $F$ & $F$ & $U$ & $T$ \\
                    $U$ & $U$ & $U$ & $T$ \\
                    $T$ & $T$ & $T$ & $T$
                \end{tabular}
            \end{center}
        \end{definition}
        Let $\mathbb{K}=\{F,\,U,\,T\}$ be the set of ternary truth constants. $K_3=(\mathbb{K},\,\wedge,\,\vee,\,\neg,\,F,\,T)$ forms a Kleene algebra~\cite{kaarli2000polynomial} with $F<U<T$. Negation is an involution satisfying $\neg(x\wedge y)\,\leftrightarrow\,\neg x\vee\neg y$ for all $x,\,y\in\mathbb{K}$. Logical implication retains its usual definition: $x\rightarrow y:=\neg x\vee y$. Where $K_3$ differs from traditional Boolean algebras is in its failure to satisfy the complements laws, i.e. $x\wedge\neg x= F$ and $x\vee\neg x= T$. These laws do not hold in the case $x=U$.
        
        As we have demonstrated in previous work~\cite{matheu2025bt2automata, matheu2025omtbt}, formalization of BTs requires a three-valued logic due to the \textit{Running} state that BT nodes may assume during execution. The instantaneous state of a BT node can be \textit{Failure} (equiv. \textit{False}, $F$), \textit{Running} (equiv. \textit{Unknown}, $U$), or \textit{Success} (equiv. \textit{True}, $T$). The logic system $K_3$ provides a concrete formalism for modeling the logical state of BTs.
\section{Problem Formulation}
\label{sec:problem-formulation}
   
    A linear discrete-time dynamical system is of the form
    \begin{equation}
        \label{eq:sys-dyn}
        \begin{aligned}
            x_{t+1}&=A_tx_t+B_tu_t \\
            y_t&=C_tx_t+D_tu_t
        \end{aligned}
    \end{equation}
    where $x_t\in\mathbb{R}^n$ is the state vector at time step $t\in\mathbb{T}$, $u_t\in U\subseteq\mathbb{R}^m$ is the control vector at time step $t$, and $y_t\in\mathbb{R}^p$ is the output at time step $t$. The matrices $A_t\in\mathbb{R}^{n\times n}$, $B_t\in\mathbb{R}^{n\times m}$, $C_t\in\mathbb{R}^{p\times n}$, and $D_t\in\mathbb{R}^{p\times m}$ may depend on time alone. A linear discrete-time optimal control problem with temporal logic constraints seeks to minimize a convex objective function subject to the dynamical constraints given in Eq.~\eqref{eq:sys-dyn}, $u_t\in U$, and additional constraints given as temporal logic specifications on the system trajectory. Approaches to solving optimal control problems of this type are traditionally done in one of two ways: 1) Formulating the problem as a mixed-integer program using constraints from the temporal logics's qualitative semantics. This approach is guaranteed to find a globally optimal solution, but suffers from high complexity. 2) Gradient based methods using a differentiable relaxation of the quantitative semantics as a maximum objective. This approach is typically much quicker to return a solution, but may get stuck in a local optimum.

    In this paper, we consider the optimal control of linear dynamical systems of the form~\eqref{eq:sys-dyn}, where constraints on the system trajectory are provided as a TBT formula. Following the conventions of temporal logic constrained control synthesis, a feasible trajectory with respect to a TBT formula is guaranteed by construction to satisfy both safety and behavioral requirements defined in the formal specification.
    
    The semantics of temporal logic formulas are conventionally defined with respect to infinite trajectories, leading to undecidability when synthesizing finite trajectories. To circumvent this issue, formulas are chosen such that the horizon of the formula is at most $T$, allowing the formula to be decided given a finite system trajectory. Defining the qualitative semantics of STL over $K_3$ allows for a third logical state \textit{Unknown}, allowing formulas whose horizons exceed $T$ to still be decided. Therefore the semantics of STL over $K_3$ generalizes to trajectories of any length. Solving optimal control problems with TBT constraints requires proper representations for the qualitative semantics of STL over $K_3$. In our approach, we formulate ternary mixed-integer encodings where integer decision variables are now \textit{trits}, taking values in the set $\{{-}1,\,0,\,{+}1\}$. In addition to encodings for STL, we develop ternary encodings for the TBT Sequence and Selector operators, thus allowing for control with respect to formulas defined over the TBT grammar in Section~\ref{subsec:TBT}.

    To make use of efficient mixed-integer optimization tools, we restrict the objective function to be quadratic in the control effort, and integer decision variables constraining the trajectory of the system may appear only in the constraints of the optimization problem. We formulate the optimal control problem as the following mathematical program:
    \begin{prob}
        \label{prob:the-prob}
        Given a system of the form~\eqref{eq:sys-dyn}, a quadratic cost function, and a TBT formula $\varphi$ constraining the trajectory of the system, solve:
        \begin{equation}
            \begin{aligned}
                u^*=\arg&\min_{u}\sum_{t=0}^{T-1}{u_t^TRu_t} \\
                \text{\textnormal{s.t.}}\qquad&x_{t+1}=A_tx_t+B_tu_t,\,t=0,\dots,\,T-1 \\
                &u_t\in U,\,t=0,\dots,\,T-1 \\
                &x_0=\xi \\
                &x,\,t^*\models\varphi
            \end{aligned}
        \end{equation}
        where $R$ is a positive semidefinite weight matrix, $U\subseteq\mathbb{R}^m$ is a convex polytope, $\xi$ is known, and $x,\,t^*\models\varphi$ indicates the synthesized trajectory satisfies a TBT formula $\varphi$ beginning at a user-defined time step $t^*$.
    \end{prob}
    Note that the above program is a Mixed-Integer Quadratic Program (MIQP) due to the quadratic objective and the constraint $x,\,t^*\models\varphi$, which is to be realized with mixed-integer linear constraints.
\section{Ternary Encodings for Behavior Tree Operators\label{sec:encodings}}
    \subsection{STL Over Partial Trajectories}

        We now extend the qualitative semantics of STL to the ternary logic $K_3$, allowing formulas to be decided in the case of partial system trajectories. A partial trajectory is a contiguous subsequence of an infinite system trajectory. Partial trajectories have a final time step, which we refer to as the partial trajectory horizon, or partial horizon. A partial trajectory ending at partial horizon $t_2$ is a finite tuple $x_\text{partial}=(x_{0},\,x_{1},\dots,\,x_{t_2-1},\,x_{t_2})$ where $t_2\geq0$, i.e. a partial trajectory always contains at least one point. Given a partial trajectory and an STL formula, it may be the case that there is not enough information to definitively determine the satisfaction of the formula. In this case we interpret the satisfaction of the formula to be \textit{Unknown}, thus requiring a redefinition of the STL operators over $K_3$. Reasoning over $K_3$ has the added benefit of capturing uncertainty in system measurements. This is implemented by allowing signal predicates to evaluate to \textit{Unknown} whenever $f(x_t)$ falls in a particular interval. We devise the more general ternary signal predicate of the form:
        \begin{equation}
            \label{eq:tern-pred}
            \mu(x_t):=\begin{cases}
                \;T, & f(x_t)\geq\delta, \\
                \;U, & -\delta<f(x_t)<\delta \\
                \;F, & f(x_t)\leq-\delta,
            \end{cases}
        \end{equation}
        where $\delta>0$ is a user-defined uncertainty threshold. 
        \begin{remark}
        If we want $\delta=0$, we revert to the classical signal predicate formulation in which $\mu=T\,\rightarrow\,f(x_t)\geq0$ and $\mu=F\,\rightarrow\,f(x_t)<0$. 
        \end{remark}

        Let the statement $x,\,t_1,\,t_2\models\varphi$ be the satisfaction of $\varphi$ at time step $t_1$ over trajectory $x$ with partial horizon $t_2$. The satisfaction at time step $t_1\leq t_2$ takes values in the ternary set of truth constants: $(x,\,t_1,\,t_2\models\varphi)\in\{F,\,U,\,T\}$. We now formulate the qualitative semantics of partial trajectory STL by extending the Boolean semantics in Def.~\ref{def:stl-qual}. A similar formulation for online STL semantics is demonstrated in~\cite{bellanger2025formally}.
        Given a trajectory $x$ and a partial horizon $t_2$, the satisfaction of an STL formula at time step $t_1\leq t_2$ is defined inductively starting with the untimed connectives:
        \[
            \begin{aligned}
                x,\,t_1,\,t_2\models\mu\;&\leftrightarrow\;\mu(x_{t_1}), \\
                x,\,t_1,\,t_2\models\neg\varphi\;&\leftrightarrow\;\neg(x,\,t_1,\,t_2\models\varphi),\\
                x,\,t_1,\,t_2\models\varphi_1\wedge\varphi_2\;&\leftrightarrow\;(x,\,t_1,\,t_2\models\varphi_1)\wedge(x,\,t_1,\,t_2\models\varphi_2),
            \end{aligned}
        \]
        where the logical connectives adhere to the rules in Def.~\ref{def:k3}. The notations $\neg(x,\,t_1,\,t_2\models\varphi)$ and $x,\,t_1,\,t_2\not\models\varphi$ are used interchangeably. Let the \textit{search window} of a timed operator be the time interval in which the operand is evaluated. For example, the statement $x,\,t_1,\,t_2\models\square_{[a,\,b]}\varphi$ is determined by evaluating $\varphi$ over the interval $[a+t_1,\,b+t_1]$, where $[a+t_1,\,b+t_1]$ is the search window. The timed operators elicit branching behavior dependent on the relationship between the search window and the partial horizon. The satisfaction of $x,\,t_1,\,t_2\models\square_{[a,\,b]}\varphi$ is equivalently:
        \[
            \begin{cases}
                \;\bigwedge_{\tau=a+t_1}^{b+t_1}{(x,\,\tau,\,t_2\models\varphi)},&b+t_1\leq t_2, \\
                \;F,\qquad\exists \tau\in[a+t_1,\,t_2]: x,\,\tau,\,t_2\not\models\varphi\quad\wedge&b+t_1>t_2, \\
                \;U, &\text{otherwise},
            \end{cases}
        \]
        and $x,\,t_1,\,t_2\models\lozenge_{[a,\,b]}\varphi$ is equivalently:
        \[
            \begin{cases}
                \;\bigvee_{\tau=a+t_1}^{b+t_1}{(x,\,\tau,\,t_2\models\varphi)}, &b+t_1\leq t_2, \\
                \;T,\qquad\exists\tau\in[a+t_1,\,t_2]:x,\,\tau,\,t_2\models\varphi\quad\wedge &b+t_1>t_2, \\
                \;U, &\text{otherwise}.
            \end{cases}
        \]

        In the case that the partial horizon is greater than the search window for the operator, the semantics reduce to the familiar infinite trajectory semantics. In the case where the search window for the operator exceeds the partial horizon, the satisfaction remains \textit{Unknown} unless there is a short-circuit. A short-circuit occurs when an operand causes a conjunction or disjunction to invariably evaluate to a constant. For instance, a conjunction will short-circuit to \textit{False} if any conjunct is \textit{False} and a disjunction will short-circuit to \textit{True} if any disjunct is \textit{True}. Note that in the case $a+t_1>t_2$, the disjunction $\exists\tau\in[a+t_1,\,t_2]$ is vacuously \textit{False} and the satisfaction evaluates to \textit{Unknown} by default for both operators.
    
        Typical mixed-integer encoding schemas for STL formulas like those found in~\cite{belta2019formal, kurtz2022mixed, raman2014model, wolff2014optimization} deal with the Boolean semantics for temporal logics where \textit{True} is encoded as the integer $1$ and \textit{False} is encoded as the integer $0$. The satisfaction of a formula $\varphi$ at a time step $t$ is encoded as a binary decision variable $z_t^\varphi\in\{0,\,1\}$. To enforce the satisfaction of a top-level formula at a particular time step $t^*$, the additional constraint $z_{t^*}^\varphi=1$ is added. Although reasoning over a finite trajectory, it is assumed that the horizon of $\varphi$ is less than the length of the finite trajectory, and therefore the constraint $z_{t^*}^\varphi=1$ is feasible.
    
        To encode the partial trajectory STL semantics over $K_3$, we choose the encoding schema: $True\mapsto {+}1$, $Unknown\mapsto 0$, and $False\mapsto {-}1$, thus preserving the order $F<U<T$. Decision variables are now ternary integers where the satisfaction of a formula $\varphi$ at time step $t$ takes values $z_t^\varphi\in\{{-}1,\,0,\,{+}1\}$. Let $T\in\mathbb{T}$ be the final time step for a finite horizon optimal control problem. Introduce $T+1$ continuous decision variables $x_t\in\mathbb{R}^n$ for $t=0,\dots,\,T$ representing the state trajectory. Next introduce $T$ continuous decision variables $u_t\in U$ for $t=0,\dots,\,T-1$ representing the control input. The system dynamics are encoded as $T$ linear constraints by declaring $x_{t+1}=A_tx_t+B_tu_t$ for $t=0,\dots,\, T-1$ and $x_0=\xi$.
        
        We begin with the ternary predicate encoding. We restrict predicate functions to be affine in the state: $f(x_t):=a^Tx_t-b$ where $a\in\mathbb{R}^n$ and $b\in\mathbb{R}$. Let $z_t^\mu\in\{{-}1,\,0,\,{+}1\}$ represent the ternary valuation of a signal predicate $\mu$ of the form~\eqref{eq:tern-pred} at time step $t$. For each time step, introduce three binary decision variables: $u_t^{{+}1}$, $u_t^{0}$, and $u_t^{{-}1}$. We constrain the binary decision variables such that only one is active at any given time and reconstruct our ternary decision variable:
        \[
            u_t^{{+}1}+u_t^{0}+u_t^{{-}1}=1,\qquad z_t^\mu=u_t^{{+}1}-u_t^{{-}1}.
        \]
        The above constraints ensure that $u_t^{k}=1\,\rightarrow\,z_t^\mu=k$ for $k\in\{{-}1,\,0,\,{+}1\}$. Let $M$ be a large constant such that $M\geq\vert a^Tx_t-b\vert$ for all $t$. Enforcing the valuation of the ternary predicate requires four linear constraints:
        \[
            \begin{aligned}
                a^Tx_t-b&\geq\delta -M(1-u_t^{{+}1}), \\
                a^Tx_t-b&\leq{-}\delta+M(1-u_t^{{-}1}), \\
                a^Tx_t-b&\leq\delta-\varepsilon+M(1-u_t^{0}), \\
                a^Tx_t-b&\geq{-}\delta+\varepsilon-M(1-u_t^{0}),
            \end{aligned}
        \]
        where $\varepsilon>0$ is a small numerical constant that enforces a strict inequality in the form of a nonstrict inequality.
        \begin{remark}
            The case where we want $\delta=0$ requires only two binary decision variables: $u_t^{{+}1}$ and $u_t^{{-}1}$ where $u_t^{{+}1}+u_t^{{-}1}=1$, $z_t^\mu=u_t^{{+}1}-u_t^{{-}1}$, and two linear constraints:
            \[
                \begin{aligned}
                    a^Tx_t-b&\geq\delta -M(1-u_t^{{+}1}), \\
                    a^Tx_t-b&\leq{-}\delta+\varepsilon+M(1-u_t^{{-}1}).
                \end{aligned}
            \]
        \end{remark}
        Negation is trivial: introduce a ternary decision variable $z_t^{\neg\varphi}$ and the constraint $z_t^{\neg\varphi}={-}z_t^{\varphi}$. Conjunctive and disjunctive constraints for $z_j\in\{{-}1,\,0,\,{+}1\}$ are of the form: $z^\wedge=\bigwedge_{j=1}^{m}z_j\,\rightarrow\, z^\wedge=\min_{j}{z_j}$, and $z^\vee=\bigvee_{j=1}^{m}\,\rightarrow\,z^\vee=\max_{j}{z_j}$.
        Conjunctive constraints are realized by the following schema: 1) Enforce the upper bound $z^\wedge\leq z_j$. 2) Introduce $m$ binary variables $b_j$ such that $\sum_{j=1}^{m}b_j=1$ (i.e. only one is active). 3) Then the lower bound for $z^\wedge$ is determined by $z^\wedge\geq z_j-2(1-b_j)$. Disjunctive constraints are realized by the following schema: 1) Enforce the lower bound $x^\vee\geq z_j$. 2) Again introduce $m$ binary decision variables $b_j$ such that $\sum_{j=1}^{m}b_j=1$. 3) Then the upper bound is enforced by the constraints $z^\vee\leq z_j+2(1-b_j)$.

        The partial trajectory temporal operators always and eventually introduce a second time dimension in the form of the partial horizon. Let $\varphi$ be an untimed formula, containing predicates and logical connectives only. The satisfaction of a timed formula at time step $t_1$ with partial horizon $t_2$ is encoded as a ternary decision variable by the following schema: If $t_2\geq b+t_1$, then the encodings are simply:
        \begin{equation}
            z_{t_1,\,t_2}^{\square_{[a,b]}\varphi}=\bigwedge_{j=a+t_1}^{b+t_1}{z_j^\varphi},\qquad z_{t_1,\,t_2}^{\lozenge_{[a,b]}\varphi}=\bigvee_{j=a+t_1}^{b+t_1}{z_j^\varphi}.
        \end{equation}
        In any other case, $b+t_1>t_2$ and the partial trajectory can evaluate to \textit{Unknown} or short-circuit. Therefore the encodings become:
        \begin{equation}
            z_{t_1,\,t_2}^{\square_{[a,b]}\varphi}=U\wedge\bigwedge_{j=a+t_1}^{t_2}z_{j}^\varphi,\qquad z_{t_1,\,t_2}^{\lozenge_{[a,b]}\varphi}=U\vee\bigvee_{j=a+t_1}^{t_2}z_{j}^\varphi,
        \end{equation}
        where the \textit{Unknown} constant acts as a logical bound, ensuring that the partial trajectory can evaluate to at most \textit{Unknown} for the always operator or at least \textit{Unknown} for the eventually operator.

        In general, the encodings for all partial trajectory STL formulas (timed or untimed) have two temporal dimensions: the partial horizion ($t_2$) and the time step in which the formula is being evaluated ($t_1$). The partial horizon is always greater than or equal to the time step in which the formula is being evaluated, as any case where $t_1>t_2$ vacuously evaluates to \textit{Unknown}. In the case of untimed formulas, the decision variables reduce to one dimension with $z_{t_1,\,t_2}^\varphi=z_{t_1}^\varphi$. When encoding nested temporal operators, the partial horizon remains consistent. For example, given decision variables $z_{t_1,\,t_2}^\varphi$ representing the partial trajectory satisfaction of a timed formula $\varphi$, let $\psi=\square_{[a,\,b]}\varphi$. Then the encoding for $\psi$ at time $t_1$ with $b+t_1\leq t_2$ is: $z_{t_1,\,t_2}^\psi=\bigwedge_{j=a+t_1}^{b+t_1}z_{j,\,t_2}^\varphi$, where the partial horizon $t_2$ is the same for all decision variables.

    \subsection{Sequence Operator}
        By now we have a full encoding schema for partial trajectory STL over the $K_3$ ternary logic and we turn our attention to the TBT operators. The following schema is derived from semantics introduced in~\cite{schirmer2024temporal} and Boolean mixed-integer encodings in~\cite{schirmer2025trace}. We begin with the Sequence operator which admits $k\geq 1$ subformulas. The satisfaction of a Sequence operator with one subformula is trivial: $x,\,t_1,\,t_2\models\seq{(\varphi)}\,\leftrightarrow\,x,\,t_1,\,t_2\models\varphi$. A Sequence operator with two subformulas has the following semantics: $x,\,t_1,\,t_2\models\seq{(\varphi_1,\,\varphi_2)}$ iff:
        \[
            \exists\tau\in[t_1,\,t_2-1]:(x,\,t_1,\,\tau\models\varphi_1)\,\wedge\,(x,\,\tau+1,\,t_2\models\varphi_2),
        \]
        meaning there exists a partial trajectory satisfying $\varphi_1$ and a partial trajectory beginning at the next time step satisfying $\varphi_2$.
        \begin{lemma}
            The Sequence operator is associative. That is: if $x,\,t_1,\,t_2\models\seq{(\seq{(\varphi_1,\,\varphi_2)},\,\varphi_3)}$, then $x,\,t_1,\,t_2\models\seq{(\varphi_1,\,\seq{(\varphi_2,\,\varphi_3)})}$. The converse is also true.
        \end{lemma}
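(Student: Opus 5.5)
Unfolding the definition of the two-child Sequence operator twice on each side will reduce both formulas to the same existential statement over a pair of split points; associativity is then just a reindexing of nested existential quantifiers together with associativity of $\wedge$. I treat ``$\models$'' as in the statement, i.e.\ as the claim that the formula evaluates to $T$. The same computation could be redone in $K_3$, reading $\exists$ as $\bigvee$ and using associativity of $\wedge$ and distributivity of $\wedge$ over $\vee$ in the Kleene algebra $K_3$.

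\textbf{Step 1: unfold the left-hand side.} By definition, $x,\,t_1,\,t_2\models\seq{(\seq{(\varphi_1,\,\varphi_2)},\,\varphi_3)}$ holds iff there is $\tau_2\in[t_1,\,t_2-1]$ with $x,\,t_1,\,\tau_2\models\seq{(\varphi_1,\,\varphi_2)}$ and $x,\,\tau_2+1,\,t_2\models\varphi_3$. Unfolding the inner Sequence with partial horizon $\tau_2$ gives $\tau_1\in[t_1,\,\tau_2-1]$ with $x,\,t_1,\,\tau_1\models\varphi_1$ and $x,\,\tau_1+1,\,\tau_2\models\varphi_2$. So the left-hand side is equivalent to the existence of $t_1\le\tau_1<\tau_2\le t_2-1$ such that
\[
(x,\,t_1,\,\tau_1\models\varphi_1)\wedge(x,\,\tau_1+1,\,\tau_2\models\varphi_2)\wedge(x,\,\tau_2+1,\,t_2\models\varphi_3).
\]

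\textbf{Step 2: unfold the right-hand side.} $x,\,t_1,\,t_2\models\seq{(\varphi_1,\,\seq{(\varphi_2,\,\varphi_3)})}$ holds iff there is $\tau_1\in[t_1,\,t_2-1]$ with $x,\,t_1,\,\tau_1\models\varphi_1$ and $x,\,\tau_1+1,\,t_2\models\seq{(\varphi_2,\,\varphi_3)}$. The latter holds iff there is $\tau_2\in[\tau_1+1,\,t_2-1]$ with $x,\,\tau_1+1,\,\tau_2\models\varphi_2$ and $x,\,\tau_2+1,\,t_2\models\varphi_3$. This produces exactly the same index set $t_1\le\tau_1<\tau_2\le t_2-1$ and the same three conjuncts as in Step 1, which gives both implications at once.

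\textbf{Main obstacle: the interval bookkeeping at the boundaries.} I must check that the ranges of $(\tau_1,\tau_2)$ really coincide, including degenerate cases such as $t_2-t_1<2$, where both index sets are empty. In that case the definition is read with a vacuously false existential, so both sides fail simultaneously (both are $U$ or $F$, not $T$) and the equivalence still holds.

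I also need that the nested evaluations are well posed: whenever an index interval is nonempty, the inner evaluations satisfy ``evaluation time $\le$ partial horizon''. This follows from $\tau_1<\tau_2$, which guarantees $\tau_1+1\le\tau_2$ and $\tau_2+1\le t_2$. Finally, the inner Sequence is evaluated with its own partial horizon ($\tau_2$ on the left, $t_2$ on the right). This is consistent with the definition, which quantifies over the horizon of the first child and starts the second child at the next step.
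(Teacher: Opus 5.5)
Your proof is correct and takes essentially the same approach as the paper: unfold both nested Sequences and regroup the two split points. The only difference is presentational. You reduce both sides to the common flattened form over $t_1\le\tau_1<\tau_2\le t_2-1$ and get both implications at once, whereas the paper proves each direction separately by explicitly rebuilding the intermediate Sequence.
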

        \begin{proof}
            We prove logical equivalence.

            ($\rightarrow$): Assume $x,\,t_1,\,t_2\models\seq{(\seq{(\varphi_1,\,\varphi_2)},\,\varphi_3)}$. Then there exists $\tau\in[t_1,\,t_2-1]$ such that $x,\,t_1,\,\tau\models\seq{(\varphi_1,\,\varphi_2)}$ and $x,\,\tau+1,\,t_2\models\varphi_3$. Expanding the inner Sequence reveals that there exists $s\in[t_1,\,\tau-1]$ such that $x,\,t_1,\,s\models\varphi_1$ and $x,\,s+1,\,\tau\models\varphi_2$. Then $x,\,s+1,\,\tau\models\varphi_2$ and $x,\,\tau+1,\,t_2\models\varphi_3$ implies $x,\,s+1,\,t_2\models\seq{(\varphi_2,\,\varphi_3)}$ which, along with $x,\,t_1,\,s\models\varphi_1$ implies $x,\,t_1,\,t_2\models\seq{(\varphi_1,\,\seq{(\varphi_2,\,\varphi_3)})}$.

            ($\leftarrow$): Assume $x,\,t_1,\,t_2\models\sel{(\varphi_1,\,\sel{(\varphi_2,\,\varphi_3)})}$. Then there exists $\tau\in[t_1,\,t_2-1]$ such that $x,\,t_1,\,\tau\models\varphi_1$ and $x,\,\tau+1,\,t_2\models\seq{(\varphi_2,\,\varphi_3)}$. Expanding the inner Sequence reveals that there exists $s\in[\tau+1,\,t_2-1]$ such that $x,\,\tau+1,\,s\models\varphi_2$ and $x,\,s+1,\,t_2\models\varphi_3$. Then $x,\,t_1,\,\tau\models\varphi_1$ and $x,\,\tau+1,\,s\models\varphi_2$ implies $x,\,t_1,\,s\models\seq{(\varphi_1,\,\varphi_2)}$. This along with $x,\,s+1,\,t_2\models\varphi_3$ implies that $x,\,t_1,\,t_2\models\seq{(\seq{(\varphi_1,\,\varphi_2)},\,\varphi_3)}$.
        \end{proof}
        \begin{corl}
        The associative property of the Sequence operator is particularly nice because it allows us to write $k$-ary formulas in the following recursive form:
        \begin{equation}
            \seq{(\varphi_1,\dots,\,\varphi_k)}=\seq{(\varphi_1,\,\varphi_{S_m})},
        \end{equation}
        where $\varphi_{S_m}=\seq{(\varphi_2,\,\varphi_{S_{m-1}})}$ for $m=1,\dots,\,k-2$ and $\varphi_{S_1}=\seq{(\varphi_{k-1},\,\varphi_k)}$.
        \end{corl}
        
         Let $z_{t_1,\,t_2}^\varphi$ be a ternary decision variable representing the satisfaction of a Sequence formula $\varphi=\seq{(\varphi_1,\,\varphi_2)}$. The integer encodings for $\varphi$ are given by:
        \begin{equation}
            z_{t_1,\,t_2}^\varphi=\bigvee_{\tau=t_1}^{t_2-1}\left(
                z_{t_1,\,\tau}^{\varphi_1}\,\wedge\,z_{\tau+1,\,t_2}^{\varphi_2}
            \right).
        \end{equation}

    \subsection{Selector Operator}
        The Selector operator is the logical dual of the Sequence operator. The satisfaction of a Selector operator with one subformula is trivial: $x,\,t_1,\,t_2\models\sel{(\varphi)}\,\leftrightarrow\,x,\,t_1,\,t_2\models\varphi$. A Selector operator with two subformulas has the following semantics: $x,\,t_1,\,t_2\models\sel{(\varphi_1,\,\varphi_2)}$ iff:
        \[
            \exists\tau\in[t_1,\,t_2-1]:(x,\,t_1,\,\tau\models\varphi_1)\vee(x,\,\tau+1,\,t_2\models\varphi_2).
        \]
        In other words, the Selector operator is satisfied if a partial trajectory satisfies $\varphi_1$ or the partial trajectory beginning at the next time step satisfies $\varphi_2$. Like its dual, the Selector operator is associative allowing us to write $k$-ary formulas in the following recursive form: $\sel{(\varphi_1,\dots,\,\varphi_k)}=\sel{(\varphi_1,\,\varphi_{S_m})}$,
        where $\varphi_{S_m}=\sel{(\varphi_2,\,\varphi_{S_{m-1}})}$ for $m=1,\dots,\,k-2$ and $\varphi_{S_1}=\sel{(\varphi_{k-1},\,\varphi_k)}$. Let $z_{t_1,\,t_2}^\psi$ be a ternary decision variable representing the satisfaction of a Selector formula $\varphi=\sel{(\varphi_1,\,\varphi_2)}$. The integer encodings for $\varphi$ are given by:
        \begin{equation}
            z_{t_1,\,t_2}^\varphi=\bigvee_{\tau=t_1}^{t_2-1}\left(
                z_{t_1,\,\tau}^{\varphi_1}\,\vee\,z_{\tau+1,\,t_2}^{\varphi_2}
            \right).
        \end{equation}

        Given encodings for a TBT formula $\varphi$ in the form of decision variables $z_{t_1,\,t_2}^\varphi$, satisfaction at a time step $t^*$ is enforced by encoding that $\varphi$ is satisfied over any partial horizon: $\exists\tau\in[t^*+1,\,T]: z_{t^*,\tau}^\varphi=1$, where $T$ is the final time step.
        Alternatively, we can reduce the number of constraints if we know $\varphi$ is the top-level specification by relaxing the disjunction and enforcing instead $z_{t^*,T}^\varphi=1$.

        \begin{thm}
            Ternary encodings of the Sequence operator with $k$ subformulas require $O\big({(t_2-t_1)}^{k-1}\big)$ constraints.
        \end{thm}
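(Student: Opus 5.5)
The argument is a direct count that unrolls the recursive form of the Corollary. Write $N=t_2-t_1$, so the window $[t_1,t_2]$ has $N+1$ time steps. The count for a $k$-ary Sequence comes from expanding the nested binary Sequences until every conjunction in the encoding involves only leaf variables $z^{\varphi_i}_{s,s'}$. We treat the leaf subformulas' variables as already available, charging only for the Sequence structure itself. Each binary $\wedge$ or $\vee$ in the paper's min/max schema costs a constant number of linear constraints and auxiliary binaries per operand, so it suffices to count operands, i.e.\ terms in the disjunctions.

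The expanded form is
\[
z^{\varphi}_{t_1,t_2}=\bigvee_{t_1\le\tau_1<\tau_2<\dots<\tau_{k-1}\le t_2-1}\Big(z^{\varphi_1}_{t_1,\tau_1}\wedge z^{\varphi_2}_{\tau_1+1,\tau_2}\wedge\cdots\wedge z^{\varphi_k}_{\tau_{k-1}+1,t_2}\Big).
\]
This follows by induction on $k$. For $k=2$ it is exactly the Sequence encoding. For general $k$, substitute the expanded expression for $\varphi_{S}=\seq(\varphi_2,\dots,\varphi_k)$ into $z^{\varphi_{S}}_{\tau_1+1,t_2}$. Then use distributivity of $\wedge$ over $\vee$ in the Kleene algebra $K_3$ (min/max distribute over each other on the chain $F<U<T$) to flatten into a single disjunction. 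The breakpoints $(\tau_1,\dots,\tau_{k-1})$ are strictly increasing integers in an interval of length $N$, so there are $\binom{N}{k-1}\le N^{k-1}/(k-1)!$ of them. Each term is a conjunction of $k$ ternary variables, costing $O(k)$ constraints. The disjunction then costs one binary selector and two constraints per term. For fixed $k$ this totals $O(N^{k-1})=O((t_2-t_1)^{k-1})$.

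An alternative checks the claim against the nested (non-flattened) encoding by recursion, which gives the same exponent. Let $C_k(N)$ be the number of constraints to encode $z^{\seq(\varphi_1,\dots,\varphi_k)}_{t_1,t_2}$ on a window of length $N$, with the inner Sequence variables created fresh for each $(\tau+1,t_2)$. Then
\[
C_k(N)=O(N)+\sum_{\tau=t_1}^{t_2-1}C_{k-1}(t_2-\tau-1),\qquad C_1(N)=O(1).
\]
Inductively $C_{k-1}(m)=O(m^{k-2})$, so the sum is $O(\sum_{m<N} m^{k-2})=O(N^{k-1})$.

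The main obstacle is bookkeeping about what is counted. Inner variables $z^{\varphi_S}_{s,t_2}$ could be shared across different outer $\tau$ and even across different $(t_1,t_2)$ pairs. Sharing would reduce the per-variable cost, but the statement concerns a single $z^\varphi_{t_1,t_2}$, for which the flattened count is the natural one. I would state explicitly that the bound is for fixed $k$ and is an upper bound. I would also note that the distributivity step is needed only to justify that the flattened encoding is equivalent, not for the counting itself.
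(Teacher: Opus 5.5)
Your proposal is correct and follows the paper's own argument: unroll the associative recursion into a single disjunction over strictly increasing breakpoints $t_1\le\tau_1<\dots<\tau_{k-1}\le t_2-1$, which gives $\binom{t_2-t_1}{k-1}$ conjuncts of $k$ ternary variables each. The induction via distributivity of $\wedge$ over $\vee$ in $K_3$ and the per-term constraint accounting spell out steps the paper leaves implicit. The nested recursion for $C_k(N)$ is a consistent cross-check yielding the same exponent.
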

        \begin{proof}
            By the recursive property of the Sequence operator, the general $k$-ary encoding has the following form:
            \[
                \bigvee_{\tau_1=t_1}^{t_2-1}\bigvee_{\tau_2=\tau_1+1}^{t_2-1}\cdots\bigvee_{\tau_{k-1}=\tau_{k-2}+1}^{t_2-1}\left(
                    z^{\varphi_1}_{t_1,\tau_1}\wedge z^{\varphi_2}_{\tau_1+1,\tau_2}\wedge\cdots\wedge z^{\varphi_k}_{\tau_{k-1}+1,t_2}
                \right)
            \]
            which is a disjunction of $\binom{t_2-t_1}{k-1}$ conjuncts, where each conjunct contains $k$ ternary decision variables.
        \end{proof}

    \subsection{Complexity}
        Let $T\in\mathbb{T}$ be the final time step for an optimal control problem and $\{\mu_1,\dots,\,\mu_k\}$ be a set of ternary predicates. Each ternary predicate requires three binary decision variables and four linear constraints for each time step $t\in[0,\,T]$. Encoding all predicates brings the total number of binary decision variables to $(T+1)\times k\times 3=O(T\times k)$ and the total number of linear constraints to $(T+1)\times k\times 4=O(T\times k)$. Conjunctive and disjunctive constraints of the form $\varphi_1\wedge\varphi_2$ and $\varphi_1\vee\varphi_2$ where $\varphi_1$ and $\varphi_2$ are untimed formulas add $2\times(T+1)=O(T)$ binary decision variables and $4\times(T+1)=O(T)$ linear constraints respectively.

        Timed formulas require two temporal dimensions, $t_1,\,t_2\in[0,\,T]$ with $t_2\geq t_1$, forming an upper triangular matrix of dimension $(T+1)\times(T+1)$. Thus there are a total of $\frac{(T+2)\times(T+1)}{2}=O(T^2)$ nontrivial decision variables per timed formula. Encodings for formulas of the form $\psi=\square_{[a,\,b]}\varphi$ require at most $b-a$ conjunctive constraints per time step in both dimensions. Therefore fully spanning a formula over both temporal dimensions requires $O(T^2)$ conjunctive constraints and $O(T^2)$ binary variables.
        
        Encoding Sequence and Selector formulas requires $O\big({(t_2-t_1)}^{k-1}\big)$ conjunctive or disjunctive constraints per time steps $(t_1,\,t_2)$ where $k$ is the number of subformulas. Over the entire horizon, the number of constraints behaves like $O(T^{k+1})$. This is a stark degradation in complexity compared to traditional STL encodings which are linear in both the number of constraints and variables due to there being only one temporal dimension. Furthermore, the complexity of MILP/MIQP problems grows exponentially in the worst case with respect to the number of binary decision variables~\cite{belta2019formal}.
\section{Case Studies\label{sec:experiments}}

    TBT specifications are often much richer than traditional STL specifications. Consider the following formulas that constrain the temporal sequencing of two subtasks: $\psi_1=\lozenge_{[0,\,t_1]}\varphi_1\wedge\lozenge_{[t_1+1,\,T]}\varphi_2$ and $\psi_2=\seq{(\lozenge_{[0,\,T]}\varphi_1,\,\lozenge_{[0,\,T]}\varphi_2)}$. Both formulas constrain the behavior of the system such that $\varphi_1$ is completed before $\varphi_2$ within $T$ time steps. However, $\psi_1$ has the additional hyperparameter $t_1$ which temporally segments satisfaction of $\varphi_1$ from satisfaction of $\varphi_2$. $\psi_2$ places no such constraint on the timing of the sequencing, only that $\varphi_1$ must be completed at some time before completion of $\varphi_2$. We note that $\psi_2$ is a richer specification than $\psi_1$ in that any trajectory that satisfies $\psi_1$ also satisfies $\psi_2$. The additional behaviors are a natural consequence of the ternary logic encodings where behaviors that were otherwise unrealizable now are quantified by evaluating to \textit{Unknown}, allowing for a much larger search space of possible solutions. We eluclidate our results with two case studies demonstrating control from TBT specifications that are not simply represented with STL.

    \subsection{Discrete-Time Double Integrator}
        Consider a mobile robot with discrete-time dynamics of the form~\eqref{eq:sys-dyn} where:
        \begin{equation}
            \label{eq:dint}
            A=\begin{bmatrix}
                \;1 & \delta t & 0 & 0\; \\
                \;0 & 1 & 0 & 0\; \\
                \;0 & 0 & 1 & \delta t\; \\
                \;0 & 0 & 0 & 1\;
            \end{bmatrix},\quad B=\begin{bmatrix}
                \;{\delta t}^2/2 & 0\; \\
                \;\delta t & 0\; \\
                \;0 & {\delta t}^2/2\; \\
                \;0 & \delta t\;
            \end{bmatrix},
        \end{equation}
        $x_t=(p_t^1,\,v_t^1,\,p_t^2,\,v_t^2)\in\mathbb{R}^4$ is the cartesian position and velocity, $u_t=(u_t^1,\,u_t^2)\in U\subseteq\mathbb{R}^2$, and $\delta t>0$ is the fixed sampling interval. The system is initialized at $x_0=(0,\,0,\,0,\,0)$ indicating the robot starts at position $(0,\,0)$ and is initially at rest. The robot is constrained to satisfy a TBT specification derived from the BT in Fig.~\ref{fig:TBT}, with leaf nodes formalized as $\varphi_A:=\lozenge_{[0,\,T]}(x_t\in A)$, $\varphi_B:=\lozenge_{[0,\,T]}(x_t\in B)$, and $\varphi_C:=\lozenge_{[0,\,T]}(x_t\in C)$. The goal locations $A$, $B$, and $C$ are boxes in $\mathbb{R}^2$ with uncertainty threshold $\delta=0.25$ to encourage the robot to enter the interior of the box. The top-level formula is then $\varphi\wedge\square(\neg O_1\wedge \neg O_2)$ where $O_1$ and $O_2$ are obstacles and $\varphi$ is the TBT formula:
        \begin{equation}\label{eqn:phi1}
        \varphi=\seq{(\varphi_A,\,\seq{(\sel{(\text{Batt}\geq80\%,\,\varphi_C)},\,\varphi_B)})}.
        \end{equation}
        Here Batt is an additional state variable representing the initial charge of the robot's battery. Depending on the value of Batt, the system will experience a branching behavior when evaluating the Selector operator. To encode obstacle avoidance, we first represent the obstacles $O_1$ and $O_2$ as polygons via finite intersections of halfspaces. Each halfspace becomes a ternary predicate of the form $z_t^j=1\,\rightarrow\,a_j^T(p_t^1,\,p_t^2)\geq b_j+\delta$ for all $j$ facets in the polygon where $a_j\in S^1$ and $\delta>0$ is a minimum distance threshold to ensure that the robot is not touching the facet. Obstacle avoidance is simply the disjunction of all facet predicates, i.e. $\neg O_1\,\leftrightarrow\,\bigvee_j z_j$.
        
        For this experiment we set the final time step to $T=25$, with a time interval $\delta t=0.5$s. Our objective seeks to minimize the control effort $J(u)=\sum_{t=0}^{T-1}u_t^Tu_t$, and we limit the control effort such that $u_t^1,\,u_t^2\in[-1,\,1]$ for all $t$. The model has $148$ continuous variables, $4800$ integer variables, $4775$ linear constraints, and $48$ quadratic objective terms. The model was solved to global optimality using Gurobi~\cite{gurobi} with an academic license. The solution was found in 97.28 seconds on an Ubuntu 22.04 machine with an AMD Ryzen 9 7950X 16 Core CPU and 128GB of memory. The synthesized trajectories for two different initial configurations of the battery level are show in Fig.~\ref{fig:ex1}.

        \begin{figure}[htbp]
            \centering
            \subfigure[$\text{Batt}\geq80\%$]{\includegraphics[width=.498\linewidth]{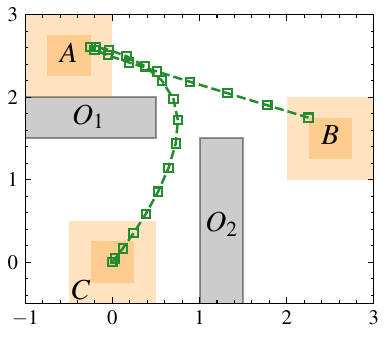}}
            \hfill
            \subfigure[$\text{Batt}<80\%$]{\includegraphics[width=.492\linewidth]{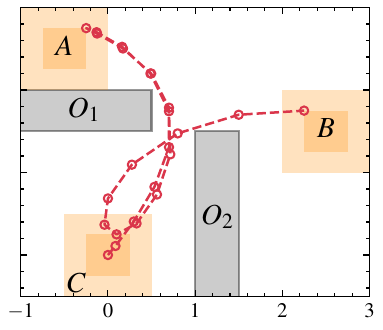}}
            \caption{Synthesized Robot Trajectories satisfying the TBT formula~\eqref{eqn:phi1} for two different initial battery levels. When the battery level is below $80\%$, the robot must return to the charger (location $C$) before proceeding to location $B$. In both cases the robot starts at the center of location $C$ and is initially at rest.}
            \label{fig:ex1}
        \end{figure}

    \subsection{Multi-Agent Planning}
        We consider a multi-agent mobile robot system with each agent's dynamics modeled as a discrete-time double integrator. The overall dynamics for the multi-agent system are modeled via the system matrices: $A=\text{BlockDiag}(A_1,\dots,\,A_N)$ and $B=\text{BlockDiag}(B_1,\dots,\,B_N)$, where each $A_i$ and $B_i$ is from Eq.~\eqref{eq:dint}. The synthesized trajectories are to satisfy the top-level TBT specification:
        \begin{equation}
            \label{eq:ex2-spec}
            \begin{split}
                \varphi=\seq&{(\varphi_1,\,\varphi_2)}\,\wedge\,\square(\neg O_1\wedge\neg O_2) \\
                &\wedge\,\square(d((p^{1,i}_t,\,p^{2,i}_t),\,(p^{1,j}_t,\,p^{2,j}_t))\geq d_{\min}).
            \end{split}
        \end{equation}
        Here $\varphi_1$ and $\varphi_2$ are sub-level tasks specifying that each agent must eventually reach a particular goal location. The second conjunct specifies that the agents must not intersect the obstacle regions $O_1$ and $O_2$. The last conjunct specifies that the agents must always keep a minimum distance between themselves and the other agents. We choose $d(\cdot,\,\cdot)$ to be the Manhattan distance in order to preserve linearity of the encodings with $d_{\min}=0.6$. We restrict the control effort for each agent such that $u_t^{1,j},\,u_t^{2,j}\in[-1,\,1]$ for all $t$ and seek to minimize the total control effort $J(u)=\sum_{t=0}^{T-1}u_t^Tu_t$. The synthesized trajectories for $3$ agents are shown in Fig.~\ref{fig:ex2} along with the details for both sub-level tasks.
        \begin{figure}[htbp]
            \centering
            \includegraphics[width=\linewidth]{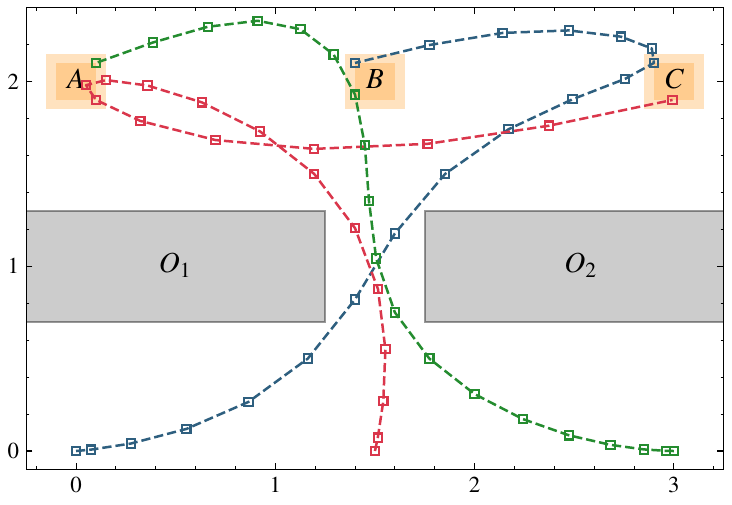}
            \caption{Synthesized trajectories for three agents satisfying the TBT formula~\eqref{eq:ex2-spec} with the maximum finite horizon is set to $T=20$ time steps with a time interval $\delta t=0.5$s. Sub-level formula $\varphi_1$ constrains eventually agent 1 ({\color{exblue}blue}) shall reach goal $C$, agent 2 ({\color{exred}red}) shall reach goal $A$, and agent 3 ({\color{exgreen}green}) shall reach goal $B$. Sub-level formula $\varphi_2$ constrains eventually agent 1 shall reach goal $B$, agent 2 shall reach goal $C$, and agent 3 shall reach goal $A$. The model has 354 continuous variables, 2040 integer variables, 2040 linear constraints, and 114 quadratic objective terms. The model was solved to global optimality in 307.75 seconds on an Ubuntu 22.04 machine with an AMD Ryzen 9 7950X 16 Core CPU and 128GB of memory.}
            \label{fig:ex2}
        \end{figure}
        An interesting result arising from constraining a minimum distance between agents is that the agents take turns navigating through the obstacle corridor. This queuing phenomenon is a natural consequence of strict safety specifications and is highlighted in Fig.~\ref{fig:queue}. 
        \begin{figure}[htbp]
            \centering
            \includegraphics[width=\linewidth]{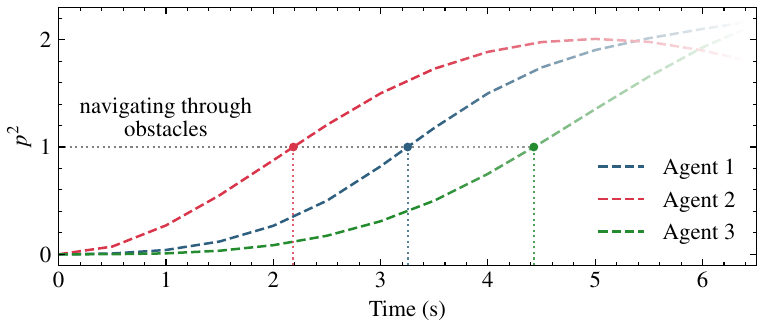}
            \caption{The synthesized multi-agent trajectories elicit a queuing phenomena when passing through the obstacles.}
            \label{fig:queue}
        \end{figure}
\section{Conclusion\label{sec:conclusion}}
    This work is a continuation of our investigation on the logical formalization of BTs, specifically for application to control synthesis. We concretize the semantics of TBTs over the ternary logic $K_3$ and develop mixed-integer linear encodings for TBTs over ternary STL. Ternary logic is the natural choice for BTs due to the fact that BTs operate in a three-state domain. Defining the qualitative semantics for STL over $K_3$ has the added benefit of capturing uncertainty, allowing signals to evaluate to \textit{Unknown} whenever a predicate is neither definitively \textit{True} nor \textit{False}. Our encoding schema allows for verification tasks and control synthesis via mixed-integer programming, highlighted through two optimal control examples. As part of future work, we will investigate verification and control synthesis with respect to formalized BTs for nonlinear systems, further generalizing the class of systems in which our methods are applicable.

\bibliographystyle{mybibstyle}
\bibliography{refs}

\end{document}